\newtheorem{assumption}{Assumption}
\Crefname{section}{Section}{Sections}
\Crefname{table}{Table}{Tables}
\def\eqref#1{equation~\ref{#1}}
\def\1{\bm{1}}
\DeclareMathAlphabet{\mathsfit}{\encodingdefault}{\sfdefault}{m}{sl}
\SetMathAlphabet{\mathsfit}{bold}{\encodingdefault}{\sfdefault}{bx}{n}
\def\gN{{\mathcal{N}}}
\def\gS{{\mathcal{S}}}
\def\sI{{\mathbb{I}}}
\def\sP{{\mathbb{P}}}
\def\sR{{\mathbb{R}}}
\newcommand{\E}{\mathbb{E}}
\DeclareMathOperator{\sgn}{sgn}
\DeclareMathOperator*{\argmax}{arg\,max}
\DeclarePairedDelimiter\norm{\lVert}{\rVert}
\DeclarePairedDelimiter\bignorm{\big\lVert}{\big\rVert}
\DeclarePairedDelimiterX{\inp}[2]{\big\langle}{\big\rangle}{#1, #2}
\newcommand{\hismoa}{h_{\text{smo1}, i}^{\gamma}}
\newcommand{\hismob}{h_{\text{smo2}, i}^{\gamma}}
\newcommand{\hismoc}{h_{\text{smo3}, i}^{\gamma}}
\newcommand{\hyalpha}{h_y^{\alpha}}
\newcommand{\hialpha}{h_i^{\alpha}}
\newcommand{\halpha}{h^{\alpha}}
\newcommand{\hbase}{h_\text{baseline}}
\DeclareMathOperator{\lip}{Lip}
\title[Mixing Classifiers to Alleviate the Accuracy-Robustness Trade-Off]{Mixing Classifiers to Alleviate the Accuracy-Robustness Trade-Off}
\begin{document}

\maketitle

\begin{abstract}%
	Deep neural classifiers have recently found tremendous success in data-driven control systems. However, existing models suffer from a trade-off between accuracy and adversarial robustness. This limitation must be overcome in the control of safety-critical systems that require both high performance and rigorous robustness guarantees. In this work, we develop classifiers that simultaneously inherit high robustness from robust models and high accuracy from standard models. Specifically, we propose a theoretically motivated formulation that mixes the output probabilities of a standard neural network and a robust neural network. Both base classifiers are pre-trained, and thus our method does not require additional training. Our numerical experiments verify that the mixed classifier noticeably improves the accuracy-robustness trade-off and identify the confidence property of the robust base classifier as the key leverage of this more benign trade-off. Our theoretical results prove that under mild assumptions, when the robustness of the robust base model is certifiable, no alteration or attack within a closed-form $\ell_p$ radius on an input can result in misclassification of the mixed classifier.
\end{abstract}

\begin{keywords}%
 	Adversarial Robustness, Image Classification, Computer Vision, Model Ensemble
\end{keywords}

\section{Introduction} \label{sec:intro}

In recent years, high-performance machine learning models have been employed in various control settings, including reinforcement learning for dynamic systems with uncertainty \citep{Levine16, Sutton18} and autonomous driving \citep{bojarski2016end, Wu17}. However, models such as neural networks have been shown to be vulnerable to adversarial attacks, which are imperceptibly small input data alterations maliciously designed to cause failure \citep{Szegedy14, Nguyen15, Huang17, eykholt2018robust, liu2019perceptual}. This vulnerability makes such models unreliable for safety-critical control where guaranteeing robustness is necessary. In response, ``adversarial training (AT)'' \citep{Kurakin17, Goodfellow15, Bai22a, Bai22b, Zheng20, Zhang19} have been studied to alleviate the susceptibility. AT builds robust neural networks by training on adversarially attacked data.

A parallel line of work focuses on mathematically certified robustness \citep{Anderson20, Ma20, Anderson21a}. Among these methods, ``randomized smoothing (RS)'' is a particularly popular one that seeks to achieve certified robustness by processing intentionally corrupted data at inference time \citep{Cohen19c, Li19, Pfrommer22}, and has recently been applied to robustify reinforcement learning-based control strategies \citep{Kumar22, Wu22}. The recent work \citep{Anderson21b} has shown that ``locally biased smoothing,'' which robustifies the model locally based on the input test datum, outperforms the traditional RS with fixed smoothing noise. However, \citet{Anderson21b} only focus on binary classification problems, significantly limiting the applications. Moreover, \citet{Anderson21b} rely on the robustness of a $K$-nearest-neighbor ($K$-NN) classifier, which suffers from a lack of representation power when applied to harder problems and becomes a bottleneck.

While some works have shown that there exists a fundamental trade-off between accuracy and robustness \citep{Tsipas19, Zhang19}, recent research has argued that it should be possible to simultaneously achieve robustness and accuracy on benchmark datasets \citep{Yang20}. To this end, variants of AT that improve the accuracy-robustness trade-off have been proposed, including TRADES \citep{Zhang19}, Interpolated Adversarial Training \citep{Lamb19}, and many others \citep{Raghunathan20, Wang19b, Tramer18, Balaji19}. However, even with these improvements, degraded clean accuracy is often an inevitable price of achieving robustness. Moreover, standard non-robust models often achieve enormous performance gains by pre-training on larger datasets, whereas the effect of pre-training on robust classifiers is less understood and may be less prominent \citep{Chen20, Fan21}.

This work makes a theoretically disciplined step towards robustifying models without sacrificing clean accuracy. Specifically, we build upon locally biased smoothing and replace its underlying $K$-NN classifier with a robust neural network that can be obtained via various existing methods. We then modify how the standard base model (a highly accurate but possibly non-robust neural network) and the robust base model are ``mixed'' accordingly. The resulting formulation, to be introduced in \cref{sec:STD+ROB}, is a convex combination of the output probabilities from the two base classifiers. We prove that, when the robust network has a bounded Lipschitz constant or is built via RS, the mixed classifier also has a closed-form certified robust radius. More importantly, our method achieves an empirical robustness level close to that of the robust base model while approaching the standard base model's clean accuracy. This desirable behavior significantly improves the accuracy-robustness trade-off, especially for tasks where standard models noticeably outperform robust models on clean data.

Note that we do not make any assumptions about how the standard and robust base models are obtained (can be AT, RS, or others), nor do we assume the adversarial attack type and budget. Thus, our mixed classification scheme can take advantage of pre-training on large datasets via the standard base classifier and benefit from ever-improving robust training methods via the robust base classifier.

\section{Background and related works}

\subsection{Notations}

The $\ell_p$ norm is denoted by $\norm{\cdot}_p$, while $\norm{\cdot}_{p*}$ denotes its dual norm. The matrix $I_d$ denotes the identity matrix in $\sR^{d \times d}$. For a scalar $a$, $\sgn (a) \in \{ -1, 0, 1 \}$ denotes its sign. For a natural number $c$, the set $[c]$ is defined as $\{1, 2, \dots, c\}$. For an event $A$, the indicator function $\sI (A)$ evaluates to 1 if $A$ takes place and 0 otherwise. The notation $\sP_{X \sim \gS} [A (X)]$ denotes the probability for an event $A (X)$ to occur, where $X$ is a random variable drawn from the distribution $\gS$. The normal distribution on $\sR^d$ with mean $\overline{x}$ and covariance $\Sigma$ is written as $\gN (\overline{x}, \Sigma)$. We denote the cumulative distribution function of $\gN(0, 1)$ on $\sR$ by $\Phi$ and write its inverse function as $\Phi^{-1}$.

Consider a model $g: \sR^d \to \sR^c$, whose components are $g_i: \sR^d \to \sR,\ i \in [c]$, where $d$ is the dimension of the input and $c$ is the number of classes. In this paper, we assume that $g (\cdot)$ does not have the desired level of robustness, and refer to it as a ``standard model'', as opposed to a ``robust model'' which we denote as $h (\cdot)$.
We consider $\ell_p$ norm-bounded attacks on differentiable neural networks. A classifier $f: \sR^d \to [c]$, defined as $f(x) = \argmax_{i \in [c]} g_i (x)$, is considered robust against adversarial attacks at an input datum $x \in \sR^d$ if it assigns the same class to all perturbed inputs $x + \delta$ such that $\norm{\delta}_p \leq \epsilon$, where $\epsilon \geq 0$ is the attack radius.

\subsection{Related Adversarial Attacks and Defenses}

The fast gradient sign method (FGSM) and projected gradient descent (PGD) attacks based on differentiating the cross-entropy loss are highly effective and have been considered the most standard attacks for evaluating robust models \citep{Madry18, Goodfellow15}. To exploit the structures of the defense methods, adaptive attacks have also been introduced \citep{Tramer20}.

On the defense side, while AT \citep{Madry18} and TRADES \citep{Zhang19} have seen enormous success, such methods are often limited by a significantly larger amount of required training data \citep{Schmidt18} and a decrease in generalization capability. Initiatives that construct more effective training data via data augmentation \citep{Rebuffi21, Gowal21} and generative models \citep{Sehwag22} have successfully produced more robust models. Improved versions of AT \citep{Jia22, Shafahi19} have also been proposed.

Previous initiatives that aim to enhance the accuracy-robustness trade-off include using alternative attacks during training \citep{Pang22}, appending early-exit side branches to a single network \citep{Hu20}, and applying AT for regularization \citep{Zheng21}. Moreover, ensemble-based defenses, such as random ensemble \citep{Liu18} and diverse ensemble \citep{Pang19, Alam22}, have been proposed. In comparison, this work considers two separate classifiers and uses their synergy to improve the accuracy-robustness trade-off, achieving higher performances.

\subsection{Locally Biased Smoothing}

Randomized smoothing, popularized by \citep{Cohen19c}, achieves robustness at inference time by replacing $f (x) = \argmax_{i\in[c]} g_i(x)$ with a smoothed classifier \scalebox{.955}[1]{$\widetilde{f} (x) = \argmax_{i \in [c]} \E_{\xi \sim \gS} \left[ g_i (x + \xi) \right]$}, where $\gS$ is a smoothing distribution. A common choice for $\gS$ is a Gaussian distribution.

\citet{Anderson21b} have recently argued that data-invariant RS does not always achieve robustness. They have shown that in the binary classification setting, RS with an unbiased distribution is suboptimal, and an optimal smoothing procedure shifts the input point in the direction of its true class. Since the true class is generally unavailable, a ``direction oracle'' is used as a surrogate. This ``locally biased smoothing'' method is no longer randomized and outperforms traditional data-blind RS. The locally biased smoothed classifier, denoted $h^{\gamma} \colon \sR^d \to \sR$, is obtained via the deterministic calculation $h^{\gamma} (x) = g(x) + \gamma h(x) \norm{\nabla g(x)}_{p*}$, where $h(x) \in \{ -1, 1 \}$ is the direction oracle and $\gamma \geq 0$ is a trade-off parameter. The direction oracle should come from an inherently robust classifier (which is often less accurate). In \citep{Anderson21b}, this direction oracle is chosen to be a one-nearest-neighbor classifier.

\section{Using a Robust Neural Network as the Smoothing Oracle} \label{sec:STD+ROB}

Locally biased smoothing was designed for binary classification, restricting its practicality. Here, we first extend it to the multi-class setting by treating the output of each class, denoted as $h^\gamma_i (x)$, independently, giving rise to:
\vspace{-2mm}
\begin{equation} \label{eq:adap_sm_1}
    \hismoa (x) \coloneqq g_i (x) + \gamma h_i (x) \norm{\nabla g_i (x)}_{p*}, \;\;\; i \in [c].
\end{equation}

Note that if $\norm{\nabla g_i (x)}_{p*}$ is large for some class $i$, then $\hismoa (x)$ can be large for class $i$ even if both $g_i (x)$ and $h_i (x)$ are small, leading to incorrect predictions. To remove the effect of the gradient magnitude difference across the classes, we propose a normalized formulation as follows:
\begin{equation} \label{eq:adap_sm_2}
    \hismob (x) \coloneqq \frac{g_i (x) + \gamma h_i (x) \norm{\nabla g_i (x)}_{p*}}{1 + \gamma \norm{\nabla g_i (x)}_{p*}}, \;\;\; i \in [c].
\end{equation}

The parameter $\gamma$ adjusts between clean accuracy and robustness. It holds that $\hismob (x) \equiv g_i (x)$ when $\gamma = 0$, and $\hismob (x) \to h_i (x)$ when $\gamma \to \infty$ for all $x$ and all $i$.

With the mixing procedure generalized to the multi-class setting, we now discuss the choice of the smoothing oracle $h_i (\cdot)$. While $K$-NN classifiers are relatively robust and can be used as the oracle, their representation power is too weak. On the CIFAR-10 image classification task \citep{cifar10}, $K$-NN only achieves around $35\%$ accuracy on clean test data. In contrast, an adversarially trained ResNet can reach $50\%$ accuracy on attacked test data \citep{Madry18}. This lackluster performance of $K$-NN becomes a significant bottleneck in the accuracy-robustness trade-off of the mixed classifier. To this end, we replace the $K$-NN model with a robust neural network. The robustness of this network can be achieved via various methods, including AT, TRADES, and RS.

Further scrutinizing \cref{eq:adap_sm_2} leads to the question of whether $\norm{\nabla g_i (x)}_{p*}$ is the best choice for adjusting the mixture of $g (\cdot)$ and $h (\cdot)$. This gradient magnitude term is a result of \citet{Anderson21b}'s assumption that $h (x) \in \{-1, 1\}$. Here, we no longer have this assumption. Instead, we assume both $g (\cdot)$ and $h (\cdot)$ to be differentiable. Thus, we generalize the formulation to
\begin{gather} \label{eq:adap_sm_3}
    \hismoc (x) \coloneqq \frac{g_i(x) + \gamma R_i(x) h_i(x)}{1 + \gamma R_i(x)}, \;\;\; i\in[c],
\end{gather}
where $R_i (x)$ is an extra scalar term that can potentially depend on both $\nabla g_i (x)$ and $\nabla h_i (x)$ to determine the ``trustworthiness'' of the base classifiers. Here, we empirically compare four options for $R_i (x)$, namely, $1$, $\norm{\nabla g_i (x)}_{p*}$, $\norm{\nabla \max_j g_j (x)}_{p*}$, and $\frac{\norm{\nabla g_i (x)}_{p*}} {\norm{\nabla h_i (x)}_{p*}}$.

Another design question is whether $g (\cdot)$ and $h (\cdot)$ should be the pre-softmax logits or the post-softmax probabilities. Note that since most attack methods are designed based on logits, the output of the mixed classifier should be logits rather than probabilities to avoid gradient masking, an undesirable phenomenon that makes it hard to evaluate the robustness properly. Thus, we have the following two options that make the mixed model compatible with existing gradient-based attacks:
\begin{enumerate}
	\setlength\itemsep{0pt}
	\item Use the logits for both base classifiers, $g (\cdot)$ and $h (\cdot)$.
	\item Use the probabilities for both base classifiers, and then convert the mixed probabilities back to logits. The required ``inverse-softmax'' operator is simply the natural logarithm.
\end{enumerate}

\begin{figure}[t]
	\centering
	\begin{minipage}{.52\textwidth}
		\includegraphics[width=\textwidth]{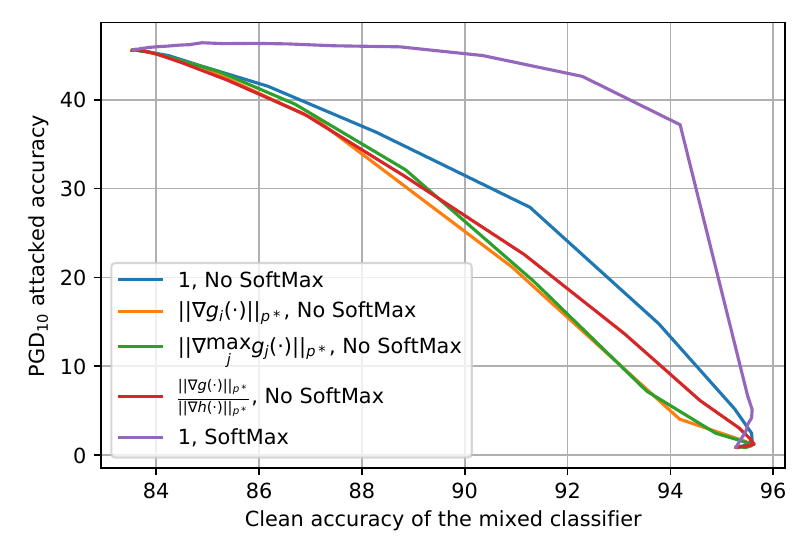}
	\end{minipage}
	\begin{minipage}{.37\textwidth}
		\begin{itemize}[leftmargin=4mm]
			\setlength\itemsep{.8em}
			\item \small ``No Softmax'' represents Option 1, i.e., use the logits for $g (\cdot)$ and $h (\cdot)$.
			\item \small ``Softmax'' represents Option 2, i.e., use the probabilities for $g (\cdot)$ and $h (\cdot)$.
			\item \small With the best formulation, high clean accuracy can be achieved with very little sacrifice on robustness.
		\end{itemize}
	\end{minipage}
	\vspace{-2mm}
	\caption{Comparing the ``attacked accuracy -- clean accuracy'' curves for various options for $R_i (x)$.}
	\label{fig:compare_R}
\end{figure}

\Cref{fig:compare_R} visualizes the accuracy-robustness trade-off achieved by mixing logits or probabilities with different $R_i (x)$ options. Here, the base classifiers are a pair of standard and adversarially trained ResNet-18s. This ``clean accuracy versus PGD$_{10}$-attacked accuracy'' plot concludes that $R_i (x) = 1$ gives the best accuracy-robustness trade-off, and $g (\cdot)$ and $h (\cdot)$ should be probabilities. \Cref{sec:more_compare_R} in the supplementary materials confirms this selection by repeating \Cref{fig:compare_R} with alternative model architectures, different robust base classifier training methods, and various attack budgets.

Our selection of $R_i (x) = 1$ differs from $R_i (x) = \norm{g_i (x)}_{p*}$ used in \citep{Anderson21b}. Intuitively, \citet{Anderson21b} used linear classifiers to motivate estimating the base models' trustworthiness with their gradient magnitudes. When the base classifiers are highly nonlinear neural networks as in our case, while a base classifier's local Lipschitzness correlates with its robustness, its gradient magnitude is not always a good local Lipschitzness estimator. Additionally, \Cref{sec:certified_radius_thms} offers theoretical intuitions for selecting mixing probabilities over mixing logits.

With these design choices implemented, the formulation \cref{eq:adap_sm_3} can be re-parameterized as
\begin{gather} \label{eq:adap_sm_4}
    \hialpha (x) \coloneqq \log \big( (1 - \alpha) g_i(x) + \alpha h_i(x) \big), \;\; i\in[c],
\end{gather}
where $\alpha = \frac{\gamma} {1 + \gamma} \in [0, 1]$. We take $\halpha (\cdot)$ in \cref{eq:adap_sm_4}, which is a convex combination of base classifier probabilities, as our proposed mixed classifier. Note that \cref{eq:adap_sm_4} calculates the mixed classifier logits, acting as a drop-in replacement for existing models which usually produce logits. Removing the logarithm recovers the output probabilities without changing the predicted class.

\subsection{Theoretical Certified Robust Radius} \label{sec:certified_radius_thms}

In this section, we derive certified robust radii for the mixed classifier $\halpha (\cdot)$ introduced in \cref{eq:adap_sm_4}, given in terms of the robustness properties of $h (\cdot)$ and the mixing parameter $\alpha$. The results ensure that despite being more sophisticated than a single model, $\halpha (\cdot)$ cannot be easily conquered, even if an adversary attempts to adapt its attack methods to its structure. Such guarantees are of paramount importance for reliable deployment in safety-critical control applications.

Noticing that the base model probabilities satisfy $0 \leq g_i (\cdot) \leq 1$ and $0 \leq h_i (\cdot) \leq 1$ for all $i$, we introduce the following generalized and tightened notion of certified robustness.

\begin{definition} \label{def:robust_with_margin}
	Consider an arbitrary input $x \in \sR^d$ and let $y = \argmax_i h_i (x)$, $\mu \in [0, 1]$, and $r \ge 0$. Then, $h(\cdot)$ is said to be \emph{certifiably robust at $x$ with margin $\mu$ and radius $r$} if $h_y (x + \delta) \geq h_i (x + \delta) + \mu$ for all $i \ne y$ and all $\delta \in \sR^d$ such that $\norm{\delta}_p \leq r$.
\end{definition}

\begin{lemma}
	\label{lem: certified_radius}
	Let $x \in \sR^d$ and $r \ge 0$. If it holds that $\alpha \in [\frac{1}{2}, 1]$ and $h (\cdot)$ is certifiably robust at $x$ with margin $\frac{1-\alpha} {\alpha}$ and radius $r$, then the mixed classifier $\halpha(\cdot)$ is robust in the sense that $\argmax_{i} \hialpha (x + \delta) = \argmax_{i} h_i (x)$ for all $\delta \in \sR^d$ such that $\norm{\delta}_p \leq r$.
\end{lemma}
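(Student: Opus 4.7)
The plan is to strip off the logarithm and then reduce everything to a clean scalar inequality that uses only (i) the margin hypothesis on $h(\cdot)$ and (ii) the fact that the base-model probabilities lie in $[0,1]$. Since $\log$ is strictly increasing, we have
\[
    \argmax_{i} \hialpha(x+\delta) \;=\; \argmax_{i} \bigl[(1-\alpha)\, g_i(x+\delta) + \alpha\, h_i(x+\delta)\bigr],
\]
so it suffices to analyze the convex combination directly. Let $y = \argmax_i h_i(x)$. For any fixed $\delta$ with $\norm{\delta}_p \le r$ and any $i \ne y$, I would show
\[
    (1-\alpha)\, g_y(x+\delta) + \alpha\, h_y(x+\delta) \;\geq\; (1-\alpha)\, g_i(x+\delta) + \alpha\, h_i(x+\delta),
\]
with strict inequality whenever the two sides can be separated (the edge case of exact equality is handled by the usual convention that $\argmax$ returns the robust class when tied; alternatively, one can observe that $g_i - g_y = 1$ forces $g_i = 1$, $g_y = 0$, which together with the margin assumption still places $y$ in the argmax).

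The key step is a one-line rearrangement: the inequality above is equivalent to
\[
    \alpha \bigl[h_y(x+\delta) - h_i(x+\delta)\bigr] \;\geq\; (1-\alpha)\bigl[g_i(x+\delta) - g_y(x+\delta)\bigr].
\]
For the left-hand side, the certified-robustness-with-margin hypothesis (\cref{def:robust_with_margin}) gives $h_y(x+\delta) - h_i(x+\delta) \ge \frac{1-\alpha}{\alpha}$, so LHS $\ge \alpha \cdot \frac{1-\alpha}{\alpha} = 1-\alpha$. For the right-hand side, since $g_i(\cdot)$ and $g_y(\cdot)$ are probabilities taking values in $[0,1]$, we have $g_i(x+\delta) - g_y(x+\delta) \le 1$, so RHS $\le 1-\alpha$. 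Chaining these two bounds yields the desired inequality.

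The role of the hypothesis $\alpha \in [\tfrac{1}{2},1]$ is to ensure $\frac{1-\alpha}{\alpha} \le 1$, which is exactly what makes the margin hypothesis non-vacuous (since $h_y - h_i$ can never exceed $1$). I do not anticipate a real obstacle in this argument; the only subtlety worth flagging explicitly is that switching from mixing logits to mixing probabilities is precisely what makes this proof go through, because the probability values live in a bounded interval $[0,1]$ that matches the scale of the margin term $(1-\alpha)/\alpha$. The same argument would fail for unbounded logits, which provides the theoretical justification alluded to at the end of \cref{sec:STD+ROB} for preferring the probability-mixing formulation \cref{eq:adap_sm_4}.
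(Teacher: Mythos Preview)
Your proposal is correct and is essentially the same argument as the paper's: both strip off the logarithm, bound $g_i(x+\delta)-g_y(x+\delta)\le 1$ using that probabilities lie in $[0,1]$, invoke the margin hypothesis $h_y(x+\delta)-h_i(x+\delta)\ge \frac{1-\alpha}{\alpha}$, and combine to obtain the nonnegativity of the difference. The only cosmetic difference is that the paper writes the chain as a single string of inequalities on $\exp(\hyalpha)-\exp(\hialpha)$, whereas you rearrange first and bound the two sides separately.
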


\begin{proof}
Suppose that $h(\cdot)$ is certifiably robust at $x$ with margin $\frac{1-\alpha} {\alpha}$ and radius $r$. Since $\alpha \in [\frac{1}{2}, 1]$, it holds that $\frac{1-\alpha} {\alpha} \in [0, 1]$. Let $y = \argmax_i h_i(x)$. Consider an arbitrary $i \in [c] \setminus \{y\}$ and $\delta \in \sR^d$ such that $\norm{\delta}_p \le r$. Since $g_i (x+\delta) \in [0, 1]$, it holds that
\begin{align*}
	\exp \big( \hyalpha (x+\delta) \big) & - \exp \big( \hialpha (x+\delta) \big) \\
	& = (1-\alpha) (g_y (x+\delta) - g_i (x+\delta)) + \alpha (h_y (x+\delta) - h_i (x+\delta)) \\
	& \ge (1-\alpha) (0-1) + \alpha(h_y(x+\delta) - h_i(x+\delta)) \\
	& \ge (\alpha - 1) + \alpha \left( \tfrac{1-\alpha} {\alpha} \right) = 0.
\end{align*}
\vspace{-2mm}
Thus, it holds that $\hyalpha (x+\delta) \geq \hialpha (x+\delta)$ for all $i \neq y$, and thus $\argmax_i \hialpha (x+\delta) = y = \argmax_i h_i (x)$.
\end{proof}

Intuitively, Definition \ref{def:robust_with_margin} ensures that all points within a radius from a nominal point have the same prediction as the nominal point, with the difference between the top and runner-up probabilities no smaller than a threshold. For practical classifiers, the robust margin can be straightforwardly estimated by calculating the confidence gap between the predicted and the runner-up classes at an adversarial input obtained with strong attacks.

While most existing provably robust results consider the special case with zero margin, we will show that models built via common methods are also robust with non-zero margins. We specifically consider two types of popular robust classifiers: Lipschitz continuous models (\cref{thm: certified_radius}) and RS models (\cref{thm: randomized_smoothing}). Here, Lemma \ref{lem: certified_radius} builds the foundation for proving these two theorems, which amounts to showing that Lipschitz and RS models are robust with non-zero margins and thus the mixed classifiers built with them are robust.

Lemma \ref{lem: certified_radius} provides further justifications for using probabilities instead of logits in the mixing operation. Intuitively, it holds that $(1 - \alpha) g_i (\cdot)$ is bounded between $0$ and $1 - \alpha$, so as long as $\alpha$ is relatively large (specifically, at least $\frac{1}{2}$), the detrimental effect of $g (\cdot)$'s probabilities when subject to attack can be bounded and be overcome by $h (\cdot)$. Had we used the logits for $g_i (\cdot)$, since this quantity cannot be bounded, it would have been much harder to overcome the vulnerability of $g (\cdot)$.

Since we do not make assumptions on the Lipschitzness or robustness of $g (\cdot)$, Lemma \ref{lem: certified_radius} is tight. To understand this, we suppose that there exists some $i \in [c] \backslash \{ y \}$ and $\delta \neq 0$ such that $\norm{\delta}_p \leq r$ that make $h_y (x + \delta) - h_i (x + \delta) \coloneqq h_d$ smaller than $\frac{1-\alpha} {\alpha}$, indicating that $- \alpha h_d > \alpha-1$. Since the only information about $g (\cdot)$ is that $g_i (x + \delta) \in [0, 1]$ and thus the value $g_y (x + \delta) - g_i (x + \delta)$ can be any number in $[-1, 1]$, it is possible that $(1 - \alpha) \left( g_y (x + \delta) - g_i (x + \delta) \right)$ is smaller than $- \alpha h_d$. In this case, it holds that $\hyalpha (x + \delta) < \hialpha (x + \delta)$, and thus $\argmax_{i} \hialpha (x + \delta) \neq \argmax_{i} h_i (x)$.

\begin{definition}
	A function $f \colon \sR^d \to \sR$ is called \emph{$\ell_p$-Lipschitz continuous} if there exists $L \in (0, \infty)$ such that $|f(x')-f(x)| \le L\|x'-x\|_p$ for all $x', x \in \sR^d$. The \textbf{Lipschitz constant} of such $f$ is defined to be $\lip_p(f) \coloneqq \inf\{L\in(0,\infty) : |f(x')-f(x)| \le L\|x'-x\|_p ~ \text{for all $x', x \in \sR^d$}\}$.
\end{definition}

\begin{assumption}
	\label{ass: lipschitz}
	The classifier $h (\cdot)$ is robust in the sense that, for all $i \in \{1, 2, \dots, n\}$, $h_i (\cdot)$ is $\ell_p$-Lipschitz continuous with Lipschitz constant $\lip_p (h_i)$.
\end{assumption}

\begin{theorem}
	\label{thm: certified_radius}
	Suppose that Assumption \ref{ass: lipschitz} holds, and let $x \in \sR^d$ be arbitrary. Let $y = \argmax_i h_i(x)$. Then, if $\alpha \in [\frac{1}{2}, 1]$, it holds that $\argmax_{i} \hialpha (x + \delta) = y$ for all $\delta \in \sR^d$ such that
	\begin{equation} \label{eq:lip_cert_rad}
		\bignorm{\delta}_p \le r_p^\alpha (x) \coloneqq \min_{i \ne y} \frac{\alpha \left( h_y(x) - h_i(x) \right) + \alpha - 1} {\alpha \left( \lip_p(h_y) + \lip_p(h_i) \right)}.
	\end{equation}
\end{theorem}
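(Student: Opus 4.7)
The plan is to reduce Theorem \ref{thm: certified_radius} to Lemma \ref{lem: certified_radius} by verifying that, under the Lipschitz assumption and for $\|\delta\|_p \le r_p^\alpha(x)$, the robust base classifier $h(\cdot)$ is certifiably robust at $x$ with margin $\mu = \frac{1-\alpha}{\alpha}$ and radius $r_p^\alpha(x)$ in the sense of Definition \ref{def:robust_with_margin}. Once this is established, Lemma \ref{lem: certified_radius} immediately gives $\argmax_i \hialpha(x+\delta) = y$ for every such $\delta$, which is precisely what needs to be shown. Note that the hypothesis $\alpha \in [\tfrac{1}{2},1]$ forces $\mu \in [0,1]$, so the margin is admissible.

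First, I would fix an arbitrary $i \neq y$ and an arbitrary $\delta$ with $\|\delta\|_p \le r_p^\alpha(x)$, and use the Lipschitz assumption componentwise to bound the evolution of each coordinate of $h$. Specifically,
\begin{align*}
h_y(x+\delta) &\ge h_y(x) - \lip_p(h_y)\,\|\delta\|_p, \\
h_i(x+\delta) &\le h_i(x) + \lip_p(h_i)\,\|\delta\|_p.
\end{align*}
Subtracting yields the one-sided estimate
\[
h_y(x+\delta) - h_i(x+\delta) \ge h_y(x) - h_i(x) - \bigl(\lip_p(h_y) + \lip_p(h_i)\bigr)\|\delta\|_p.
\]

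Next, I would rearrange the definition of $r_p^\alpha(x)$ to show that the right-hand side is at least $\frac{1-\alpha}{\alpha}$. By construction,
\[
r_p^\alpha(x) \le \frac{\alpha(h_y(x)-h_i(x)) + \alpha - 1}{\alpha(\lip_p(h_y) + \lip_p(h_i))} = \frac{h_y(x) - h_i(x) - \tfrac{1-\alpha}{\alpha}}{\lip_p(h_y)+\lip_p(h_i)},
\]
so $(\lip_p(h_y)+\lip_p(h_i))\|\delta\|_p \le h_y(x) - h_i(x) - \tfrac{1-\alpha}{\alpha}$. Combining with the previous display gives $h_y(x+\delta) - h_i(x+\delta) \ge \tfrac{1-\alpha}{\alpha}$. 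Since $i \neq y$ and $\delta$ were arbitrary (subject to the radius bound), this verifies the margin condition of Definition \ref{def:robust_with_margin}. Applying Lemma \ref{lem: certified_radius} then finishes the proof.

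The main obstacle is essentially bookkeeping: one must keep the minimum over $i \ne y$ intact when passing from the pairwise Lipschitz bound to the final certified radius, and one must be careful that the numerator $\alpha(h_y(x)-h_i(x)) + \alpha - 1$ in \cref{eq:lip_cert_rad} can be negative (in which case $r_p^\alpha(x) \le 0$ and the statement is vacuous for that particular $i$, but the minimum is still well-defined). A minor subtlety is ensuring the denominator is positive; this is non-problematic whenever at least one of $\lip_p(h_y), \lip_p(h_i)$ is strictly positive (the degenerate case of both being zero makes $h_y - h_i$ constant and can be handled separately or excluded by the Lipschitz definition, which requires $L \in (0,\infty)$).
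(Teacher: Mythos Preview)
Your proposal is correct and follows essentially the same route as the paper: apply the componentwise Lipschitz bounds to show $h_y(x+\delta)-h_i(x+\delta)\ge \tfrac{1-\alpha}{\alpha}$ for every $i\ne y$ whenever $\|\delta\|_p\le r_p^\alpha(x)$, which verifies the margin condition of Definition~\ref{def:robust_with_margin}, and then invoke Lemma~\ref{lem: certified_radius}. Your added remarks on the vacuous case $r_p^\alpha(x)\le 0$ and on positivity of the denominator are fine bookkeeping but not needed for the argument to go through.
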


\begin{proof}
Suppose that $\alpha \in [\frac{1}{2}, 1]$, and let $\delta \in \sR^d$ be such that $\norm{\delta}_p \le r_p^\alpha(x)$. Furthermore, let $i \in [c] \setminus \{y\}$. It holds that
\begin{align*}
	h_y (x+\delta) - h_i (x+\delta)
	&= h_y(x) - h_i(x) + h_y(x+\delta) - h_y(x) + h_i(x) - h_i(x+\delta) \\
	& \ge h_y(x) - h_i(x) - \lip_p(h_y) \norm{\delta}_p - \lip_p(h_i) \norm{\delta}_p \\
	& \ge h_y(x) - h_i(x) - \left( \lip_p(h_y) + \lip_p(h_i) \right) r_p^\alpha(x) \ge \tfrac{1 - \alpha} {\alpha}.
\end{align*}
Therefore, $h(\cdot)$ is certifiably robust at $x$ with margin $\frac{1-\alpha} {\alpha}$ and radius $r_p^\alpha(x)$. Hence, by Lemma \ref{lem: certified_radius}, the claim holds.
\end{proof}

We remark that the $\ell_p$ norm that \Cref{thm: certified_radius} certifies may be arbitrary (e.g., $\ell_1$, $\ell_2$, or $\ell_\infty$), so long as the Lipschitz constant of the robust network $h (\cdot)$ is computed with respect to the same norm.

Assumption \ref{ass: lipschitz} is not restrictive in practice. For example, Gaussian RS with smoothing variance $\sigma^2 I_d$ yields robust models with $\ell_2$-Lipschitz constant $\sqrt{ \nicefrac{2} {\pi \sigma^2} }$ \citep{Salman19}. Moreover, empirically robust methods such as AT and TRADES often train locally Lipschitz continuous models, even though there may not be closed-form theoretical guarantees.

Assumption \ref{ass: lipschitz} can be relaxed to the even less restrictive scenario of using local Lipschitz constants over a neighborhood (e.g., a norm ball) around a nominal input $x$ (i.e., how flat $h (\cdot)$ is near $x$) as a surrogate for the global Lipschitz constants. In this case, \cref{thm: certified_radius} holds for all $\delta$ within this neighborhood. Specifically, suppose that for an arbitrary input $x$ and an $\ell_p$ attack radius $\epsilon$, it holds that $h_y (x) - h_y (x + \delta) \le \epsilon \cdot \lip_p^x (h_y)$ and $h_i (x + \delta) - h_i (x) \le \epsilon \cdot \lip_p^x (h_i)$ for all $i \neq y$ and all perturbations $\delta$ such that $\norm{\delta}_p \leq \epsilon$. Furthermore, suppose that the robust radius $r_p^\alpha (x)$, as defined in \cref{eq:lip_cert_rad} but use the local Lipschitz constant $\lip_p^x$ as a surrogate to the global constant $\lip_p$, is not smaller than $\epsilon$. Then, if the robust base classifier $h (\cdot)$ is correct at the nominal point $x$, then the mixed classifier $\halpha (\cdot)$ is robust at $x$ within the radius $\epsilon$. The proof follows that of \cref{thm: certified_radius}.

The relaxed Lipschitzness defined above can be estimated for practical differentiable classifiers via an algorithm similar to the PGD attack \citep{Yang20}. \citet{Yang20} also showed that many existing empirically robust models, including those trained with AT or TRADES, are in fact locally Lipschitz. Note that \citet{Yang20} evaluated the local Lipschitz constants of the logits, whereas we analyze the probabilities, whose Lipschitz constants are much smaller. Therefore, \cref{thm: certified_radius} provides important insights into the empirical robustness of the mixed classifier.

An intuitive explanation of \Cref{thm: certified_radius} is that if $\alpha \to 1$, then $r_p^\alpha(x) \to \min_{i \ne y}\frac{h_y(x) - h_i(x)}{\lip_p(h_y) + \lip_p(h_i)}$, which is the standard Lipschitz-based robust radius of $h (\cdot)$ around $x$ (see \citep{Fazlyab19, Hein17} for further discussions on Lipschitz-based robustness). On the other hand, if $\alpha$ is too small in comparison to the relative confidence of $h (\cdot)$ and put an excess weight into the non-robust classifier $g (\cdot)$, namely, if there exists $i \ne y$ such that $\alpha \le \frac{1} {1 + h_y(x) - h_i(x)}$, then $r_p^\alpha (x) \le 0$, and in this case, we cannot provide non-trivial certified robustness for $\halpha (\cdot)$. If $h (\cdot)$ is $100\%$ confident in its prediction, then $h_y(x) - h_i(x) = 1$ for all $i \ne y$, and therefore this threshold value of $\alpha$ becomes $\frac{1}{2}$, leading to non-trivial certified radii for $\alpha > \frac{1}{2}$. However, once we put over $\frac{1}{2}$ of the weight into $g (\cdot)$, a nonzero radius around $x$ is no longer certifiable. Since no assumptions on the robustness of $g (\cdot)$ around $x$ have been made, this is intuitively the best one can expect.

We now move on to tightening the certified radius in the special case when $h (\cdot)$ is an RS classifier and our robust radii are defined in terms of the $\ell_2$ norm.

\begin{assumption}
	\label{ass: randomized smoothing}
	The classifier $h(\cdot)$ is a (Gaussian) randomized smoothing classifier, i.e., $h(x) = \E_{\xi \sim \gN (0, \sigma^2 I_d)} \left[ \overline{h} (x+\xi) \right]$ for all $x \in \sR^d$, where $\overline{h} \colon \sR^d \to [0, 1]^c$ is a neural model that is non-robust in general. Furthermore, for all $i \in [c]$, $\overline{h}_i (\cdot)$ is not 0 almost everywhere or 1 almost everywhere.
\end{assumption}

\begin{theorem}
	\label{thm: randomized_smoothing}
	Suppose that Assumption \ref{ass: randomized smoothing} holds, and let $x \in \sR^d$ be arbitrary. Let $y = \argmax_i h_i(x)$ and $y' = \argmax_{i \ne y} h_i(x)$. Then, if $\alpha \in [\frac{1}{2}, 1]$, it holds that $\argmax_i \hialpha (x+\delta) = y$ for all $\delta \in \sR^d$ such that
	\vspace{-1.3mm}
	\begin{align*}
		\norm{\delta}_2 &\le r_\sigma^\alpha(x)
		\coloneqq \frac{\sigma}{2} \Big( \Phi^{-1} \left( \alpha h_y(x)\right) - \Phi^{-1} \left( \alpha h_{y'} (x) + 1 - \alpha \right) \Big).
	\end{align*}
	\vspace{-7.6mm}
\end{theorem}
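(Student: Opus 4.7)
The plan is to reduce the theorem to \Cref{lem: certified_radius} by showing that, under Assumption \ref{ass: randomized smoothing}, the robust base classifier $h(\cdot)$ is certifiably robust at $x$ with margin $\frac{1-\alpha}{\alpha}$ and radius $r_\sigma^\alpha(x)$. Equivalently, I need to verify that $\alpha h_y(x+\delta) \geq \alpha h_i(x+\delta) + (1-\alpha)$ for every $i \neq y$ and every $\delta \in \sR^d$ with $\norm{\delta}_2 \le r_\sigma^\alpha(x)$.

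The crucial observation I would exploit is that both sides of this inequality are themselves Gaussian smoothings of $[0,1]$-valued functions. Indeed, $\alpha h_y (x) = \E_{\xi \sim \gN(0, \sigma^2 I_d)} [\alpha \overline{h}_y(x+\xi)]$ with $\alpha \overline{h}_y$ taking values in $[0,\alpha] \subseteq [0,1]$, and $\alpha h_i(x) + (1-\alpha) = \E_\xi [\alpha \overline{h}_i(x+\xi) + (1-\alpha)]$ with $\alpha \overline{h}_i + (1-\alpha)$ taking values in $[1-\alpha,1] \subseteq [0,1]$. To each I would apply the standard Gaussian-smoothing Lipschitz bound of \citep{Cohen19c, Salman19}: $\sigma \Phi^{-1}(F)$ is $\ell_2$-Lipschitz with constant $1$ whenever $F$ is the Gaussian smoothing of a $[0,1]$-valued function. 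The non-degeneracy clause of Assumption \ref{ass: randomized smoothing} keeps these smoothed quantities strictly inside $(0,1)$, so $\Phi^{-1}$ remains finite.

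Applying this bound to lower-bound $\alpha h_y(x+\delta)$ and upper-bound $\alpha h_i(x+\delta) + (1-\alpha)$, the margin inequality reduces, by monotonicity of $\Phi$, to $\Phi^{-1}(\alpha h_y(x)) - \norm{\delta}_2/\sigma \geq \Phi^{-1}(\alpha h_i(x) + 1 - \alpha) + \norm{\delta}_2/\sigma$, i.e., $\norm{\delta}_2 \le \tfrac{\sigma}{2}\big( \Phi^{-1}(\alpha h_y(x)) - \Phi^{-1}(\alpha h_i(x) + 1 - \alpha) \big)$. Since $\Phi^{-1}$ is strictly increasing and $h_{y'}(x) \geq h_i(x)$ for every $i \neq y$ by the definition of $y'$, the minimum of this per-class radius over $i \neq y$ is attained at $i = y'$, giving exactly $r_\sigma^\alpha(x)$. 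Thus the margin condition holds uniformly over $\norm{\delta}_2 \le r_\sigma^\alpha(x)$, and \Cref{lem: certified_radius} closes the argument. The main subtlety I anticipate is spotting the right affine repackaging: the transformations $u \mapsto \alpha u$ and $u \mapsto \alpha u + (1-\alpha)$ of the unsmoothed outputs preserve the $[0,1]$ range, which is precisely what causes the $\alpha$-dependence to appear inside $\Phi^{-1}$ in the final radius formula; once this is in place, no regularity assumption on $g(\cdot)$ is needed, consistent with the spirit of \Cref{lem: certified_radius}.
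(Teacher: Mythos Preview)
Your proposal is correct and follows essentially the same approach as the paper: both reduce to \Cref{lem: certified_radius} by applying the $\Phi^{-1}$-Lipschitz property of Gaussian smoothing to the affinely rescaled base outputs $\alpha\overline{h}_y$ and $\alpha\overline{h}_i + (1-\alpha)$ (the paper writes these same quantities as $\overline{h}_y/(1+\mu_\alpha)$ and $(\overline{h}_i+\mu_\alpha)/(1+\mu_\alpha)$ with $\mu_\alpha = \tfrac{1-\alpha}{\alpha}$, which is identical since $1+\mu_\alpha = 1/\alpha$). Your parameterization in terms of $\alpha$ is arguably more transparent than the paper's, but the argument is the same.
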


The proof of \Cref{thm: randomized_smoothing} is provided in \Cref{sec:rs_proof} in the supplementary materials.

To summarize our certified radii, \cref{thm: certified_radius} applies to very general Lipschitz continuous robust base classifiers $h(\cdot)$ and arbitrary $\ell_p$ norms, whereas \cref{thm: randomized_smoothing}, applying to the $\ell_2$ norm and RS base classifiers, strengthens the certified radius by exploiting the stronger Lipschitzness arising from the special structure and smoothness granted by Gaussian convolution operations. Theorems \ref{thm: certified_radius} and \ref{thm: randomized_smoothing} guarantee that our proposed robustification cannot be easily circumvented by adaptive attacks.

\section{Numerical Experiments}

\subsection{\texorpdfstring{$\alpha$}{alpha}'s Influence on Mixed Classifier Robustness} \label{sec:exp_CNN_fix}

We first use the CIFAR-10 dataset to evaluate the mixed classifier $\halpha (\cdot)$ with various values of $\alpha$. We use a ResNet18 model trained on unattacked images as the standard base model $g (\cdot)$ and use another ResNet18 trained on PGD$_{20}$ data as the robust base model $h (\cdot)$. We consider PGD$_{20}$ attacks that target $g (\cdot)$ and $h (\cdot)$ individually (abbreviated as STD and ROB attacks and can be regarded as transfer attacks), in addition to the adaptive PGD$_{20}$ attack generated using the end-to-end gradient of $\halpha (\cdot)$, denoted as the MIX attack.

The test accuracy of each mixed classifier is presented in \Cref{fig:STD+ROB}. As $\alpha$ increases, the clean accuracy of $\halpha (\cdot)$ converges from the clean accuracy of $g (\cdot)$ to the clean accuracy of $h (\cdot)$. In terms of attacked performance, when the attack targets $g (\cdot)$, the attacked accuracy increases with $\alpha$. When the attack targets $h (\cdot)$, the attacked accuracy decreases with $\alpha$, showing that the attack targeting $h (\cdot)$ becomes more benign when the mixed classifier emphasizes $g (\cdot)$. When the attack targets the mixed classifier $\halpha (\cdot)$, the attacked accuracy increases with $\alpha$.

When $\alpha$ is around $0.5$, the MIX-attacked accuracy of $\halpha (\cdot)$ quickly increases from near zero to more than $30\%$ (two-thirds of $h (\cdot)$'s attacked accuracy). This observation precisely matches the theoretical intuition from \Cref{thm: certified_radius}. Meanwhile, when $\alpha$ is greater than $0.5$, the clean accuracy gradually decreases at a much slower rate, leading to the alleviated accuracy-robustness trade-off.

\subsection{The Relationship between $\halpha (\cdot)$'s Robustness and $h (\cdot)$'s Confidence} \label{sec:conf_properties}

This difference in how clean and attacked accuracy change with $\alpha$ can be explained by the prediction confidence of the robust base classifier $h (\cdot)$. Specifically,  \cref{tab:confidence} confirms that $h (\cdot)$ makes confident correct predictions even when under attack (average robust margin is $0.768$). Moreover, $h (\cdot)$'s robust margin follows a long-tail distribution: the median robust margin is $0.933$, much larger than the $0.768$ mean. Thus, most attacked inputs correctly classified by $h (\cdot)$ are highly confident (i.e., robust with large margins). As Lemma \ref{lem: certified_radius} suggests, such a property is precisely what the mixed classifier relies on. Intuitively, once $\alpha$ becomes greater than $0.5$ and gives $h (\cdot)$ more authority over $g (\cdot)$, $h (\cdot)$ can use its confidence to correct $g (\cdot)$'s mistakes under attack.

\begin{figure}[!t]
	\begin{minipage}{.64\textwidth}
		\centering
    	\includegraphics[width=.86\textwidth, height=.5\textwidth, trim={4mm 4.8mm 2.5mm 4mm}, clip] {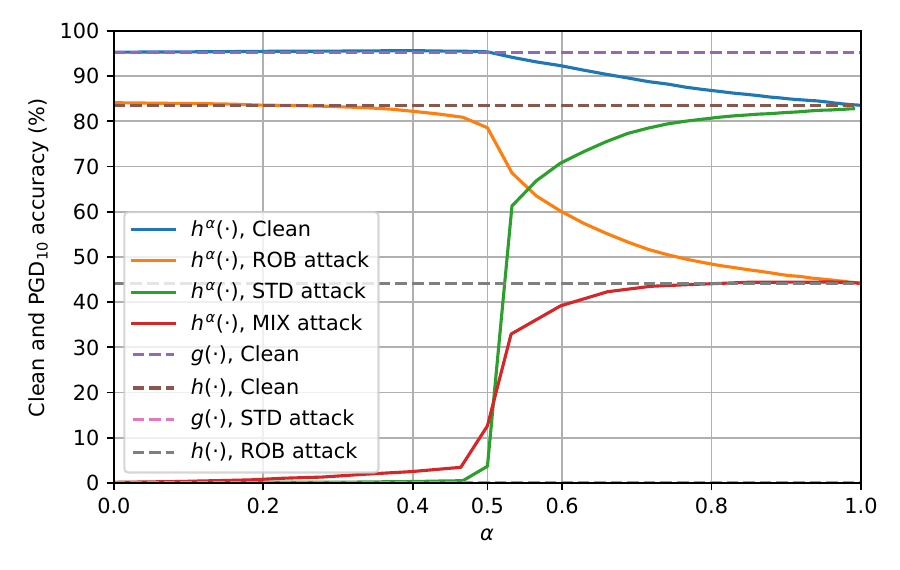}
    	\captionof{figure}{The accuracy of the mixed classifier $\halpha (\cdot)$ at various $\alpha$ values. ``STD attack'', ``ROB attack'', and ``MIX attack'' refer to the PGD$_{20}$ attack generated using the gradient of $g (\cdot)$, $h (\cdot)$, and $\halpha (\cdot)$ respectively, with $\epsilon$ set to $\frac{8}{255}$.}
    	\label{fig:STD+ROB}
	\end{minipage}
	\hfill
	\begin{minipage}{.35\textwidth}
		\centering
		\captionof{table}{Average gap between the probabilities of the predicted class and the runner-up class.}
		\label{tab:confidence}
		\begin{small}
		\begin{tabular}{l|c|c}
			\toprule
			& \multicolumn{2}{c}{Clean Instances} \\
			& Correct & Incorrect \\
			\midrule
			$g (\cdot)$ & 0.982 & 0.698 \\
			$h (\cdot)$ & 0.854 & 0.434 \\
			\bottomrule
		\end{tabular}

		$\ \ $\begin{tabular}{l|c|c}
			\toprule
			& \multicolumn{2}{c}{PGD$_{20}$ Instances} \\
			& Correct & Incorrect \\
			\midrule
			$g (\cdot)$ & 0.602 & 0.998 \\
			$h (\cdot)$ & 0.768 & 0.635 \\
			\bottomrule
		\end{tabular}
		\end{small}
		\vspace{2.5mm}
	\end{minipage}
\end{figure}

On the other hand, $h (\cdot)$ is unconfident when producing incorrect predictions on clean data, with the top two classes' output probabilities separated by merely $0.434$. This probability gap again forms a long-tail distribution (the median is $0.378$ which is less than the mean), confirming that $h (\cdot)$ rarely makes confident incorrect predictions. Now, consider clean data that $g (\cdot)$ correctly classifies and $h (\cdot)$ mispredicts. Recall that we assume $g (\cdot)$ to be more accurate but less robust, so this scenario should be common. Since $g (\cdot)$ is confident (average top two classes probability gap is $0.982$) and $h (\cdot)$ is usually unconfident, even when $\alpha > 0.5$ and $g (\cdot)$ has less authority than $h (\cdot)$ in the mixture, $g (\cdot)$ can still correct some of the mistakes from $h (\cdot)$.

In summary, $h (\cdot)$ is confident when making correct predictions on attacked data while being unconfident when misclassifying clean data, and such a confidence property is the key source of the mixed classifier's improved accuracy-robustness trade-off. Additional analyses in \Cref{sec:more_compare_R} with alternative base models imply that multiple existing robust classifiers share this benign confidence property and thus help the mixed classifier improve the trade-off.

\subsection{Visualization of the Certified Robust Radii}

Next, we visualize the certified robust radii presented in \cref{thm: certified_radius} and \cref{thm: randomized_smoothing}. Since a (Gaussian) RS model with smoothing covariance matrix $\sigma^2 I_d$ has an $\ell_2$-Lipschitz constant $\sqrt{\nicefrac{2}{\pi\sigma^2}}$, such a model can be used to simultaneously visualize both theorems, with \cref{thm: randomized_smoothing} giving tighter certificates of robustness. Note that RS models with a larger smoothing variance certify larger radii but achieve lower clean accuracy, and vice versa. Here, we consider the CIFAR-10 dataset and select $g (\cdot)$ to be a ConvNeXT-T model with a clean accuracy of $97.25 \%$, and use the RS models presented in \citep{Zhang19} as $h (\cdot)$. For a fair comparison, we select an $\alpha$ value such that the clean accuracy of the constructed mixed classifier $\halpha (\cdot)$ matches that of another RS model $\hbase (\cdot)$ with a smaller smoothing variance. The expectation term in the RS formulation is approximated with the empirical mean of $10000$ random perturbations drawn from $\gN(0, \sigma^2 I_d)$, and the certified radii of $\hbase(\cdot)$ are calculated using Theorems \ref{thm: certified_radius} and $\ref{thm: randomized_smoothing}$ by setting $\alpha$ to $1$. \Cref{fig: certified_radii} displays the calculated certified accuracy of $\halpha (\cdot)$ and $\hbase (\cdot)$ at various attack radii. The ordinate ``Accuracy'' at a given abscissa ``$\ell_2$ radius'' reflects the percentage of the test data for which the considered model gives a correct prediction as well as a certified radius at least as large as the $\ell_2$ radius under consideration.

\begin{figure*}[t]
	\centering
	\begin{subfigure}[\parbox{0.41\textwidth}{
            $\hbase(\cdot)$: RS with $\sigma=0.5$. \hspace{11.3mm} $ $ \\
            \hspace{.3mm} $\halpha(\cdot)$: $\alpha=0.76$; $h(\cdot)$ is RS with $\sigma=1$.
        }][t]{
        \captionsetup{justification=centering}
		\centering
		\includegraphics[height=.3075\textwidth]{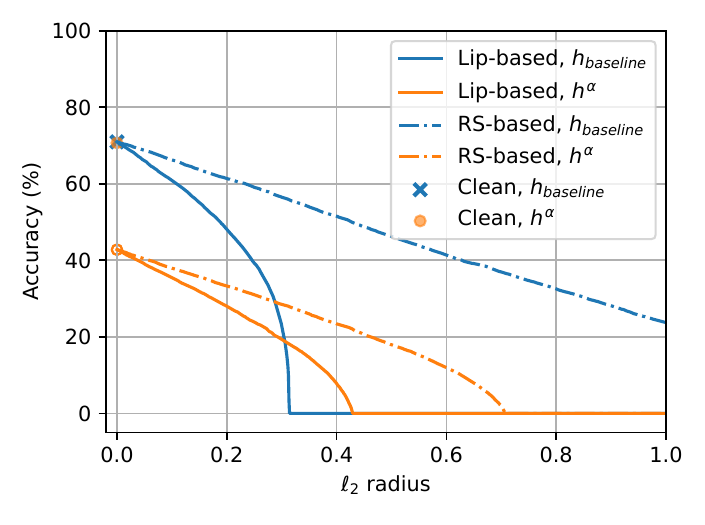}
		\vspace{-6.1mm}}
	\end{subfigure}
	\hfill
	\begin{subfigure}[\parbox{0.46\textwidth}{
			$\hbase(\cdot)$: RS with $\sigma=0.25$. \\
			Consider two mixed classifier examples: \\
			$\halpha_a (\cdot)$: $\alpha=0.76$; $h_a(\cdot)$ is RS with $\sigma=0.5$; \\
			$\hspace{.2mm} \halpha_b (\cdot)$: $\alpha=0.67$; $h_b(\cdot)$ is RS with $\sigma=1.0$.
		}][t]{
		\centering
		\includegraphics[height=.3075\textwidth]{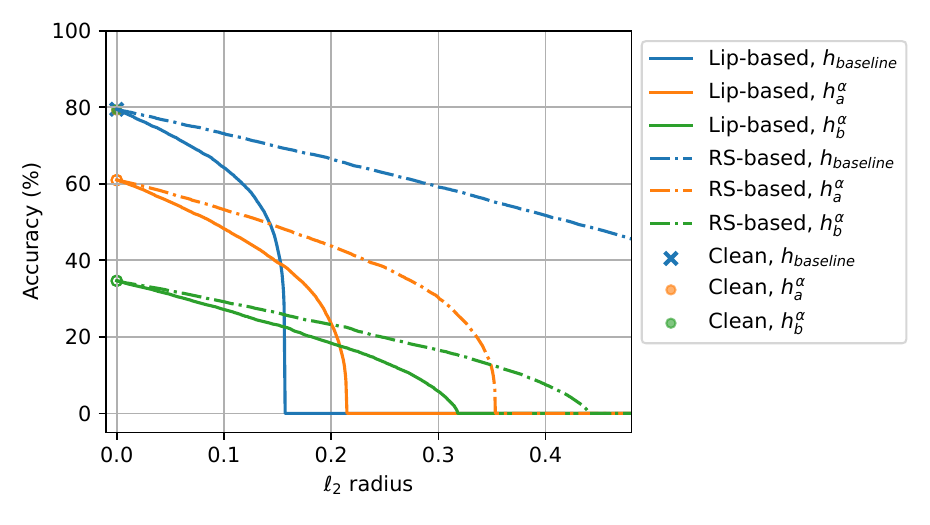}
		\vspace{-2.2mm}}
	\end{subfigure}
	\caption{Comparing the certified accuracy-robustness trade-off of RS models and our mixed classifier using both Lipschitz-based (Lip-based) certificates and RS-based certificates (Theorems \ref{thm: certified_radius} and \ref{thm: randomized_smoothing}, respectively). The clean accuracy is the same between $\hbase (\cdot)$ and $\halpha (\cdot)$ in each subfigure, and the empty circles represent discontinuity in the certified accuracy at radius $0$.}
	\label{fig: certified_radii}
\end{figure*}

In both subplots of \Cref{fig: certified_radii}, the certified robustness curves of $\halpha (\cdot)$ do not connect to the clean accuracy when $\alpha \to 0$. This is because Theorems \ref{thm: certified_radius} and \ref{thm: randomized_smoothing} both consider robustness with respect to $h (\cdot)$ and do not certify test inputs at which $h (\cdot)$ makes incorrect predictions, even though $\halpha (\cdot)$ may correctly predict some of these points. This is reasonable because we do not assume any robustness or Lipschitzness of $g (\cdot)$, and $g (\cdot)$ is allowed to be arbitrarily incorrect whenever the radius is non-zero.

The Lipschitz-based bound of \cref{thm: certified_radius} allows us to visualize the performance of the mixed classifier $\halpha (\cdot)$ when $h (\cdot)$ is an $\ell_2$-Lipschitz model. In this case, the curves associated with $\halpha (\cdot)$ and $\hbase (\cdot)$ intersect, with $\halpha (\cdot)$ achieving higher certified accuracy at larger radii and $\hbase (\cdot)$ certifying more points at smaller radii. Adjusting $\alpha$ and the Lipschitz constant of $h (\cdot)$ can change the location of this intersection while maintaining the clean accuracy. Thus, the mixed classifier allows for optimizing the certified accuracy at a particular radius without sacrificing clean accuracy.

The RS-based bound from \cref{thm: randomized_smoothing} captures the behavior of the mixed classifier $\halpha (\cdot)$ when $h (\cdot)$ is an RS model. For both $\halpha (\cdot)$ and $\hbase (\cdot)$, the RS-based bounds certify larger radii than the corresponding Lipschitz-based bounds. Nonetheless, $\hbase (\cdot)$ can certify more points with the RS-based guarantee. Intuitively, this phenomenon suggests that RS models can yield correct but low-confidence predictions when under large-radius attack, and thus may not be best-suited for our mixing operation, which relies on robustness with non-zero margins. Meanwhile, Lipschitz models, a more general and common class of models, exploit the mixing operation more effectively. Moreover, as shown in \Cref{fig:STD+ROB} and \Cref{tab:confidence}, empirically robust models often yield high-confidence correct predictions when under attack, making them more suitable to be used as $\halpha (\cdot)$'s robust base classifier.

\section{Conclusions}

This work proposes to mix the predicted probabilities of an accurate classifier and a robust classifier to mitigate the accuracy-robustness trade-off. These two base classifiers can be pre-trained, and the resulting mixed classifier requires no additional training. Theoretical results certify that the mixed classifier inherits the robustness of the robust base model under realistic assumptions. Empirical evaluations show that our method approaches the high accuracy of the latest standard models while retaining the robustness of modern robust classification methods. Hence, this work provides a foundation for future research to focus on either accuracy or robustness without sacrificing the other, providing additional incentives for deploying robust models in safety-critical control.

\newpage
\acks{This work was supported by grants from ONR, NSF, and C3 AI.}

\bibliography{Papers.bib}

\newpage
\appendix

\section{Additional Empirical Support for \texorpdfstring{$R_i(x) = 1$}{R\_i(x)=1}} \label{sec:more_compare_R}

\begin{figure*}[!t]
	\centering
	\begin{subfigure}[\footnotesize{ConvNeXT-T and TRADES WRN-34 under $\ell_\infty$ PGD attack.}][t]{
		\parbox{0.52\textwidth}{
			\centering
			\includegraphics[width=.45\textwidth]{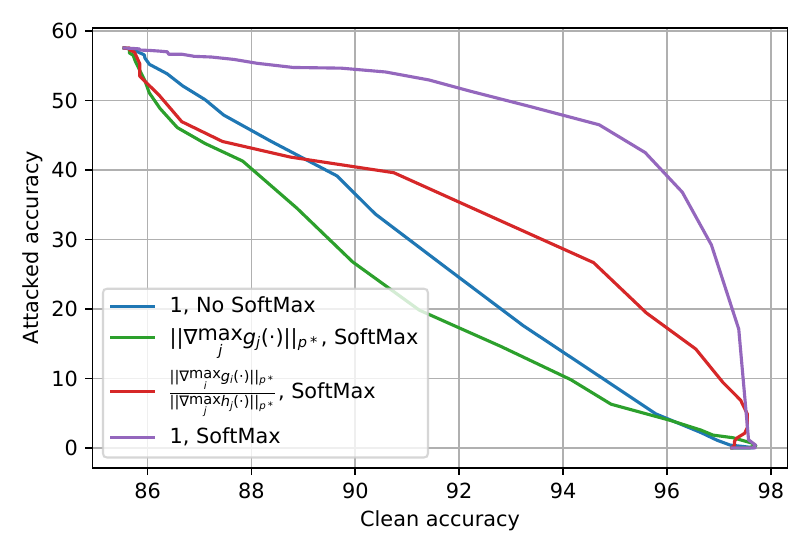}
			\label{fig:compare_R_2a}
		}}
	\end{subfigure}
	\hfill
	\begin{subfigure}[\footnotesize{Standard and AT ResNet18s under $\ell_2$ PGD attack.}][t]{
		\parbox{0.45\textwidth}{
			\centering
			\includegraphics[width=.45\textwidth]{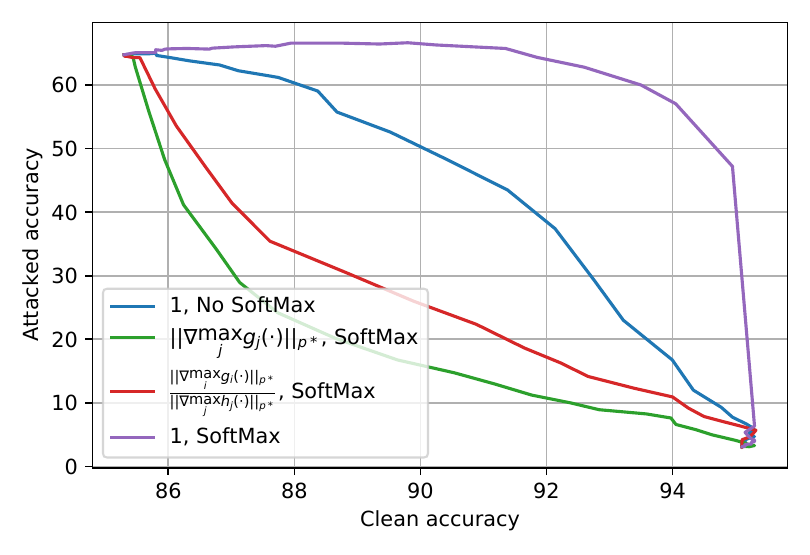}
			\label{fig:compare_R_2b}
			\vspace{-4.9mm}
		}}
	\end{subfigure}
	\vspace{-2mm}
	\caption{Comparing the options for $R_i (x)$ with alternative selections of base classifiers.}
	\label{fig:compare_R_2}
\end{figure*}

\begin{table*}[!t]
	\centering
	\caption{Experiment settings for comparing the choices of $R_i (x)$.}
	\begin{small}
	\begin{tabular}{l|c|c|c}
		\toprule
		& Attack Budget; PGD Steps & $g (\cdot)$ Architecture & $h (\cdot)$ Architecture \\
		\midrule
		\Cref{fig:compare_R}    & $\ell_\infty$, $\ \epsilon = \frac{8}{255}$, $ $10 Steps & Standard ResNet18   & $\ell_\infty$-adversarially-trained ResNet18 \\
		\Cref{fig:compare_R_2a} & $\ell_\infty$, $\ \epsilon = \frac{8}{255}$, $ $20 Steps & Standard ConvNeXT-T & TRADES WideResNet-34 \\
		\Cref{fig:compare_R_2b} & $\ell_2$, $\;\; \epsilon = 0.5$, $ $ 20 Steps		  		  & Standard ResNet18   & $\ell_2$-adversarially-trained ResNet18 \\
		\bottomrule
	\end{tabular}
	\end{small}
	\label{tab:compare_R_settings}
\end{table*}

Finally, we use additional empirical evidence (Figures \ref{fig:compare_R_2a} and \ref{fig:compare_R_2b}) to show that $R_i (x) = 1$ is the appropriate choice for the mixed classifier and that the probabilities should be used for the mixture. While most experiments in this paper are based on the popular ResNet architecture, our method does not depend on any ResNet properties. Therefore, for the experiment in \Cref{fig:compare_R_2a}, we select a more modern ConvNeXT-T model \citep{Liu22} pre-trained on ImageNet-1k as an alternative architecture for $g (\cdot)$. We also use a robust model trained via TRADES in place of an adversarially-trained network for $h (\cdot)$ for the interest of diversity. Additionally, although most of our experiments are based on $\ell_\infty$ attacks, the proposed method applies to all $\ell_p$ attack budgets. In \Cref{fig:compare_R_2b}, we provide an example that considers the $\ell_2$ attack. The experiment settings are summarized in \Cref{tab:compare_R_settings}.

Figures \ref{fig:compare_R_2a} and \ref{fig:compare_R_2b} confirm that setting $R_i (x)$ to the constant $1$ achieves the best trade-off curve between clean and attacked accuracy, and that mixing the probabilities outperforms mixing the logits. This result aligns with the conclusions of \Cref{fig:compare_R} and our theoretical analyses.

For all three cases listed in \cref{tab:compare_R_settings}, the mixed classifier reduces the error rate of $h (\cdot)$ on clean data by half while maintaining $80\%$ of $h (\cdot)$'s attacked accuracy. This observation suggests that the mixed classifier noticeably alleviates the accuracy-robustness trade-off. Additionally, our method is especially suitable for applications where the clean accuracy gap between $g (\cdot)$ and $h (\cdot)$ is large. On easier datasets such as MNIST and CIFAR-10, this gap has been greatly reduced by the latest advancements in constructing robust classifiers. However, on harder tasks such as CIFAR-100 and ImageNet-1k, this gap is still large, even for state-of-the-art methods. For these applications, standard classifiers often benefit much more from pre-training on larger datasets than robust models.

\section{Proof of \Cref{thm: randomized_smoothing}} \label{sec:rs_proof}

\setcounter{theorem}{4}
\begin{theorem}[Restated]
	Suppose that Assumption \ref{ass: randomized smoothing} holds, and let $x \in \sR^d$ be arbitrary. Let $y = \argmax_i h_i(x)$ and $y' = \argmax_{i \ne y} h_i(x)$. Then, if $\alpha \in [\frac{1}{2}, 1]$, it holds that $\argmax_i \hialpha (x+\delta) = y$ for all $\delta \in \sR^d$ such that
	\vspace{-1mm}
	\begin{align*}
		\norm{\delta}_2 &\le r_\sigma^\alpha(x)
		\coloneqq \frac{\sigma}{2} \Big( \Phi^{-1} \left( \alpha h_y(x)\right) - \Phi^{-1} \left( \alpha h_{y'} (x) + 1 - \alpha \right) \Big).
	\end{align*}
\end{theorem}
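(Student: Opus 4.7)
The plan is to invoke \cref{lem: certified_radius} and thereby reduce the claim to showing that, under Assumption~\ref{ass: randomized smoothing}, the classifier $h (\cdot)$ is certifiably robust at $x$ with margin $\frac{1 - \alpha}{\alpha}$ and radius $r_\sigma^\alpha (x)$. Unpacking \cref{def:robust_with_margin}, for every $i \neq y$ and every $\delta \in \sR^d$ with $\norm{\delta}_2 \le r_\sigma^\alpha (x)$, it suffices to establish the single inequality $\alpha h_y (x + \delta) \ge \alpha h_i (x + \delta) + (1 - \alpha)$.

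The main tool is the $\Phi^{-1}$-Lipschitzness of Gaussian smoothings in the style of \citet{Cohen19c}: whenever $\bar{q} \colon \sR^d \to [0, 1]$ is measurable and $q (x) \coloneqq \E_{\xi \sim \gN (0, \sigma^2 I_d)} [\bar{q} (x + \xi)]$, the composition $\Phi^{-1} \circ q$ is $\frac{1}{\sigma}$-Lipschitz in the $\ell_2$ norm. The key trick is to apply this bound not to $h_y$ and $h_i$ themselves but to the affinely rescaled smoothings $p_y (x) \coloneqq \alpha h_y (x)$ and $p_i (x) \coloneqq \alpha h_i (x) + (1 - \alpha)$, which are the Gaussian smoothings of $\alpha \bar{h}_y \in [0, \alpha]$ and $\alpha \bar{h}_i + (1 - \alpha) \in [1 - \alpha, 1]$, respectively. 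Both underlying functions are $[0, 1]$-valued, and the non-degeneracy part of Assumption~\ref{ass: randomized smoothing} keeps $h_y (x)$ and $h_i (x)$ strictly inside $(0, 1)$, so $p_y (x), p_i (x) \in (0, 1)$ and $\Phi^{-1}$ is finite at these arguments. The margin condition then reads $p_y (x + \delta) \ge p_i (x + \delta)$, which, by strict monotonicity of $\Phi^{-1}$, is equivalent to $\Phi^{-1} (p_y (x + \delta)) \ge \Phi^{-1} (p_i (x + \delta))$.

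Applying the Lipschitz bound on each side, a sufficient condition is $\Phi^{-1} (p_y (x)) - \norm{\delta}_2 / \sigma \ge \Phi^{-1} (p_i (x)) + \norm{\delta}_2 / \sigma$, i.e., $\norm{\delta}_2 \le \frac{\sigma}{2} \big( \Phi^{-1} (\alpha h_y (x)) - \Phi^{-1} (\alpha h_i (x) + 1 - \alpha) \big)$. Taking the minimum over $i \neq y$ and again using monotonicity of $\Phi^{-1}$, the worst-case $i$ is exactly the runner-up $y'$, which yields precisely $r_\sigma^\alpha (x)$ and closes the argument via \cref{lem: certified_radius}.

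The only step requiring genuine care is justifying the $\Phi^{-1}$-Lipschitz bound for the rescaled functions; this is exactly what the ``not $0$ or $1$ almost everywhere'' condition in Assumption~\ref{ass: randomized smoothing} was placed to support, as it prevents $h_i (x)$ from hitting the degenerate endpoints $\{0, 1\}$ where $\Phi^{-1}$ blows up and keeps the affine images $p_y (x), p_i (x)$ strictly inside $(0, 1)$. Given this, the argument is little more than composing Cohen's one-dimensional Lipschitz bound with the affine transformations that package the margin $\frac{1 - \alpha}{\alpha}$ required by \cref{lem: certified_radius}.
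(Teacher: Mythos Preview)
Your proposal is correct and is essentially the same argument as the paper's proof. The paper introduces auxiliary base functions $\tilde{h}_y = \overline{h}_y/(1+\mu_\alpha)$ and $\tilde{h}_i = (\overline{h}_i+\mu_\alpha)/(1+\mu_\alpha)$ with $\mu_\alpha = (1-\alpha)/\alpha$; since $1+\mu_\alpha = 1/\alpha$, these are exactly your $\alpha\,\overline{h}_y$ and $\alpha\,\overline{h}_i + (1-\alpha)$, so the paper's smoothed $\hat{h}_y,\hat{h}_i$ coincide with your $p_y,p_i$, and both proofs then apply the $\Phi^{-1}$-Lipschitz bound of \citet{Salman19,Levine19} and finish via \cref{lem: certified_radius}.
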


\begin{proof}
	First, note that since every $\overline{h}_i (\cdot)$ is not 0 almost everywhere or 1 almost everywhere, it holds that $h_i (x) \in (0, 1)$ for all $i$ and all $x$. Now, suppose that $\alpha \in [\frac{1}{2}, 1]$, and let $\delta\in \sR^d$ be such that $\norm{\delta}_2 \le r_\sigma^\alpha(x)$. Let $\mu_\alpha \coloneqq \frac{1-\alpha}{\alpha}$. Define the function $\tilde{h} \colon \sR^d \to \sR^c$ by
	\begin{equation*}
		\tilde{h}_y (x) = \frac{\overline{h}_y (x)} {1 + \mu_\alpha}, \quad
		\tilde{h}_i (x) = \frac{\overline{h}_i (x) + \mu_\alpha} {1 + \mu_\alpha} \text{ for all $i \ne y$}.
	\end{equation*}

	Furthermore, define $\hat{h} \colon \sR^d \to \sR^c$ by $\hat{h} (x) = \E_{\xi \sim \gN (0, \sigma^2 I_d)} \left[ \tilde{h} (x+\xi) \right]$.
	\vspace{1mm}

	Then, since $\tilde{h}_y(x) = \frac{\overline{h}_y(x)} {1+\mu_\alpha} \in (0, \frac{1}{1+\mu_\alpha}) \subseteq (0, 1)$ 
	and $\tilde{h}_i(x) = \frac{\overline{h}_i(x)+\mu_\alpha} {1+\mu_\alpha} \in (\frac{\mu_\alpha} {1+\mu_\alpha}, 1) \subseteq (0, 1)$ for all $i \neq y$, it must be the case that $0 < \tilde{h}_i (x) < 1$ for all $i$ and all $x$, and hence, for all $i$, the function $x \mapsto \Phi^{-1} \big( \hat{h}_i(x) \big)$ is $\ell_2$-Lipschitz continuous with Lipschitz constant $\frac{1}{\sigma}$ (see \cite[Lemma~1]{Levine19}, or Lemma 2 in \citep{Salman19} and the discussion thereafter). Therefore,
	\begin{equation}
		\left| \Phi^{-1} \big( \hat{h}_i(x+\delta) \big) - \Phi^{-1} \big( \hat{h}_i(x) \big) \right| \le \frac{\norm{\delta}_2} {\sigma} \le \frac{r_\sigma^\alpha(x)}{\sigma} \label{eq: lipschitz_inequality}
	\end{equation}
	for all $i$. Applying \cref{eq: lipschitz_inequality} for $i=y$ yields that
	\begin{equation} \label{eq:lip_y}
		\Phi^{-1} \big( \hat{h}_y (x+\delta) \big) \ge \Phi^{-1} \big( \hat{h}_y(x) \big) - \frac{r_\sigma^\alpha (x)} {\sigma}.
	\end{equation}
	Since $\Phi^{-1}$ monotonically increases and $\hat{h}_i(x) \le \hat{h}_{y'} (x)$ for all $i \ne y$, applying \cref{eq: lipschitz_inequality} to $i \ne y$ gives
	\begin{align} \label{eq:lip_not_y}
		\Phi^{-1} \big( \hat{h}_i (x+\delta) \big) & \le \Phi^{-1} \big( \hat{h}_i(x) \big) + \frac{r_\sigma^\alpha(x)} {\sigma}
		\le \Phi^{-1} \big( \hat{h}_{y'} (x) \big) + \frac{r_\sigma^\alpha(x)} {\sigma}.
	\end{align}
	Subtracting \cref{eq:lip_not_y} from \cref{eq:lip_y} gives that
	\begin{align*}
		\Phi^{-1} \big( \hat{h}_y (x+\delta) \big) & - \Phi^{-1} \big( \hat{h}_i (x+\delta) \big)
		\ge \Phi^{-1} \big(\hat{h}_y (x) \big) - \Phi^{-1} \big(\hat{h}_{y'} (x) \big) - \frac{2 r_\sigma^\alpha (x)} {\sigma}
	\end{align*}
	for all $i \ne y$. By the definitions of $\mu_\alpha$, $r_\sigma^\alpha (x)$, and $\hat{h} (x)$, the right-hand side of this inequality equals zero. Since $\Phi$ monotonically increases, we find that $\hat{h}_y(x+\delta) \ge \hat{h}_i(x+\delta)$ for all $i \ne y$. Thus,
	\begin{align*}
		\frac{h_y(x+\delta)} {1 + \mu_\alpha} & = \E_{\xi \sim \gN (0, \sigma^2 I_d)} \left[ \frac{\overline{h}_y (x+\delta+\xi)} {1+\mu_\alpha} \right] = \hat{h}_y(x+\delta) \\
		& \ge \hat{h}_i (x+\delta) = \E_{\xi \sim \gN (0, \sigma^2 I_d)} \left[ \frac{\overline{h}_i (x+\delta+\xi) + \mu_\alpha} {1+\mu_\alpha}\right]
		= \frac{h_i(x+\delta) + \mu_\alpha} {1+\mu_\alpha}.
	\end{align*}
	Hence, $h_y (x+\delta) \ge h_i (x+\delta) + \mu_\alpha$ for all $i \ne y$, so $h (\cdot)$ is certifiably robust at $x$ with margin $\mu_\alpha = \frac{1-\alpha} {\alpha}$ and radius $r_\sigma^\alpha (x)$. Therefore, by Lemma \ref{lem: certified_radius}, it holds that $\argmax_i \hialpha (x+\delta) = y$ for all $\delta \in \sR^d$ such that 
	$\norm{\delta}_2 \le r_\sigma^\alpha(x)$, which concludes the proof.
\end{proof}

\end{document}